\algrenewcommand\textproc{\texttt}
\let\float@addtolists\relax\makeatother
\pgfplotsset{compat=newest}
\DeclareMathOperator*{\argmax}{argmax}
\theoremstyle{plain}
\newtheorem{mytheorem}{\textbf{Theorem}}
\theoremstyle{definition}
\definecolor{CUHKorange}{RGB}{244,106,18} 
\definecolor{CUHKblue}{RGB}{0,111,190}    
\definecolor{CUHKgreen}{RGB}{0,127,128}   
\definecolor{CUHKred}{RGB}{228,46,36}     
\definecolor{CUHKyellow}{RGB}{198,148,34} 
\definecolor{CUHKdark}{RGB}{114,44,114}   
\definecolor{CUHKmiddle}{RGB}{144,44,144} 
\newif\ifrev
  \newcommand{\bei}[1]{{\color{red} [Bei: #1]}} 
  \newcommand{\bei}[1]{}
\definecolor{myblue}{RGB}{29,114,221}    
\definecolor{myyellow}{RGB}{255,255,191} 
\definecolor{myorange}{RGB}{244,106,18}  
\definecolor{mygray}{RGB}{102,102,102}   
\definecolor{mypink}{RGB}{252,228,215}   
\definecolor{CUpurple}{RGB}{136,43,142}
\definecolor{CUlpurple}{RGB}{165,133,182}
\definecolor{CUgold}{RGB}{221,163,0}
\definecolor{CUribbon}{RGB}{244,223,176}
\definecolor{CUblack}{RGB}{34,24,21}
\definecolor{PKUred}{RGB}{126,24,28}
\definecolor{gray6}{gray}{0.6}
\definecolor{gray7}{gray}{0.7}
\definecolor{gray8}{gray}{0.8}
\definecolor{gray9}{gray}{0.9}
\begin{document}
\date{}

\title{
    AdaOPC: A Self-Adaptive Mask Optimization Framework For Real Design Patterns
}


\author{Wenqian Zhao}
\affiliation{
    \institution{CUHK}
}

\author{Xufeng Yao}
\affiliation{
    \institution{CUHK}
}

\author{Ziyang Yu}
\affiliation{
    \institution{CUHK}
}

\author{Guojin Chen}
\affiliation{
    \institution{CUHK}
}

\author{Yuzhe Ma}
\affiliation{
    \institution{HKUST(GZ)}
}

\author{Bei Yu}
\affiliation{
    \institution{CUHK}
}

\author{Martin D.F.~Wong}
\affiliation{
    \institution{CUHK}
}

\begin{abstract}
    Optical proximity correction (OPC) is a widely-used resolution enhancement technique (RET) for printability optimization. 
    Recently, rigorous numerical optimization and fast machine learning are the research focus of OPC in both academia and industry, each of which complements the other in terms of robustness or efficiency. 
    We inspect the pattern distribution on a design layer and find that different sub-regions have different pattern complexity.
    Besides, we also find that many patterns repetitively appear in the design layout, and these patterns may possibly share optimized masks. 
    We exploit these properties and propose a self-adaptive OPC framework to improve efficiency. 
    Firstly we choose different OPC solvers adaptively for patterns of different complexity from an extensible solver pool to reach a speed/accuracy co-optimization. 
    Apart from that, we prove the feasibility of reusing optimized masks for repeated patterns and hence, build a graph-based dynamic pattern library reusing stored masks to further speed up the OPC flow. 
    Experimental results show that our framework achieves substantial improvement in both performance and efficiency.
\end{abstract}

\maketitle
\pagestyle{plain}

\section{Introduction}\label{intro}
\label{sec:intro}
Over the past few decades, VLSI technology node has been continuously shrinking, resulting non-neglectable lithography proximity effect, which affects the real manufacturability \cite{pan2013design}. Resolution enhancement techniques (RETs) are utilized to improve the printability in the lithography process. Optical Proximity Correction (OPC) is one of the widely used RETs to optimize mask printability by compensating for the diffraction effect in the lithography process. 

OPC approaches can be categorized into: 
\begin{enumerate*}
    \item rule-based OPC \cite{park2000efficient}, 
    \item model-based OPC \cite{kuang2015robust, su2016fast, matsunawa2015optical}, 
    \item inverse lithography technique (ILT)-based OPC \cite{poonawala2007mask, ma2020unified, yu2021gpu} and 
    \item machine learning (ML)-based OPC \cite{yang2019gan, jiang2019fast, geng2019sraf, chen2021damo}.
\end{enumerate*}
Rule-based methods solve the problem heuristically, which is simple and fast but only suitable for less aggressive designs. 
Model-based OPCs mathematically model the lithography process and move/shift the edge fractures accordingly, ensuring mask fidelity but restricted by the solutions space in advanced technology nodes. 
ILT-based methods solve the inverse problem of the imaging system through optimizing an objective function, which is the most performant analytical method to tackle the OPC problem. 
As recent years witnessed the rapid development of machine learning algorithms and hardware,
ML-based OPCs have shown remarkable speed-up in the OPC flows and are prevailing in design for manufacturing (DFM) academia. \cite{yang2019gan} and \cite{chen2021develset} use deep learning models for initial mask generation to reduce the iterations number of ILT.
\cite{jiang2020neural} uses a deep learning model to simulate the conventional ILT correction process.
A lethal drawback is that machine learning model is a data-driven black box. Such methods are not guaranteed to work for some critical patterns. 
In summary, no approach is flawless and shows absolute superiority over others. Patterns with different complexity require different approaches.


\begin{figure}[tb!]
    \raggedleft
    \includegraphics[width=\linewidth]{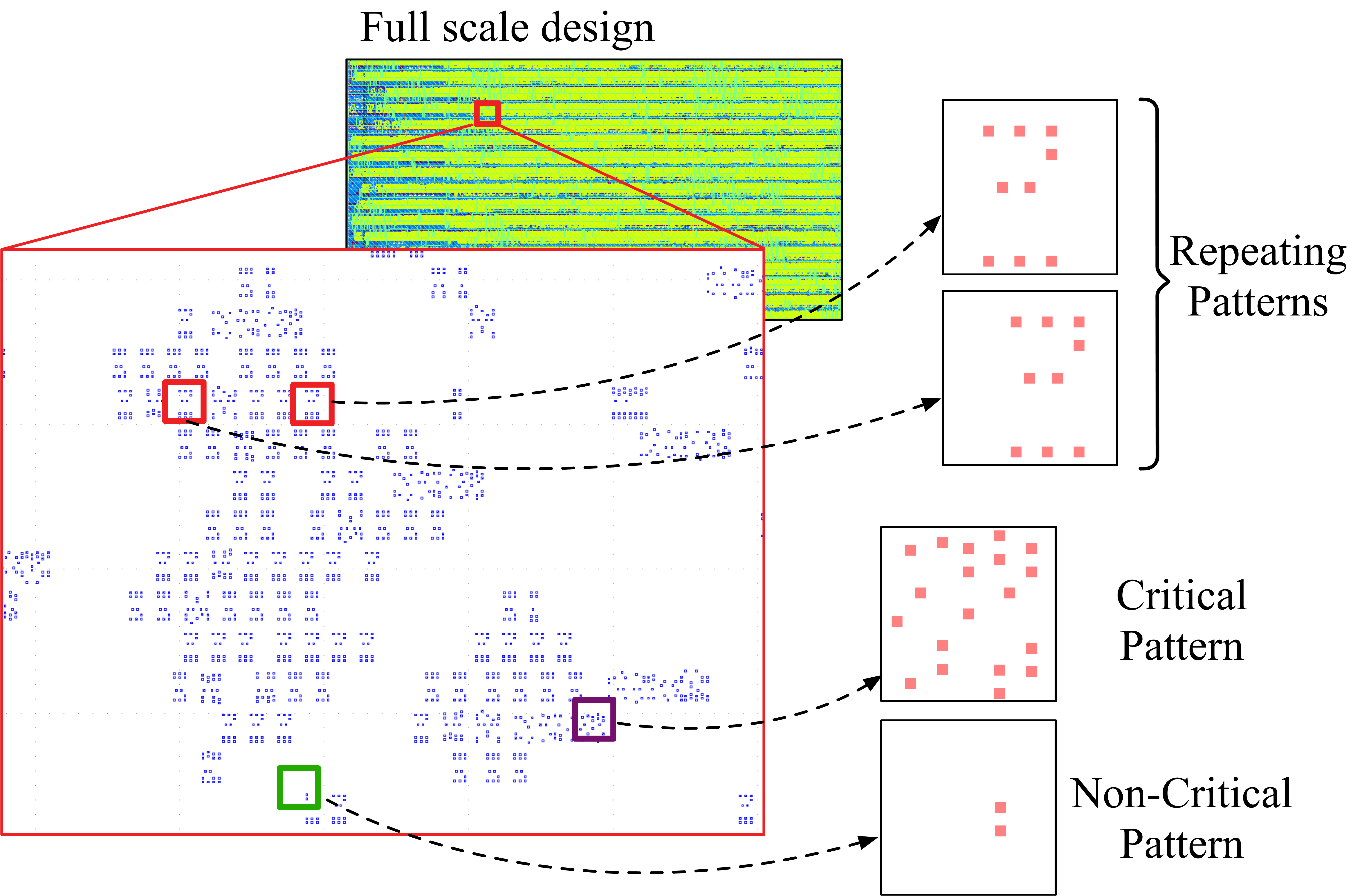} 
    \caption{Visualization of a real design layer. Two discoveries motivated our OPC framework design: 1. Patterns scattered unevenly along the design layout with different complexity. We denote complicated patterns as critical and simple patterns as non-critical. 2. Patterns have large ratio of repetition on a full layout. }
    \label{fig:whole design}
\end{figure}

Achieving desired OPC results with high efficiency on real designs require a systematic analysis on pattern distribution and complexity. 
By inspecting a real design, we come up with a few properties that can be leveraged to assist the analysis. 
Firstly, we notice that there exists a \emph{diversity} of pattern density, which indicates that some regions are dense while some regions are sparse, as visualized in \Cref{fig:whole design}, which implicates different kinds of OPC solutions are required. 
Moreover, as we take a closer observation of each sub-region, there also exists a \emph{similarity} of the pattern distribution in different sub-regions. 
Many patterns are repeatedly placed on the full design layer. These patterns have similar geometric shapes and are placed at different locations. Such pattern repetition enables us to utilize their shared geometric characteristics, which motivates the idea that the OPC solution of a pattern can be reused in a similar pattern for efficiency improvement.  
Motivated by these observations, we propose a self-adaptive framework, namely AdaOPC, for conducting OPC on real designs.

Firstly, AdaOPC is equipped with pattern analysis that can classify a sub-region as critical or non-critical, such that the proper OPC solver can be selected. 
More specifically, for densely scattered sub-regions, not only the diffraction effect but also the optical interference of incident light caused by neighboring components will jointly affect the final printed image. 
Such complex patterns for OPC are regarded as critical and more suitable for robust yet rigorous numerical optimization methods for higher manufacturability. 
In contrast, sub-regions with sparsely scattered patterns are simpler, and the mask optimization process is more suitable for machine learning models to learn with superior inference speed.  

Secondly, given that many patterns are repetitive on the design layer with the same geometric shape, as shown in \Cref{fig:whole design},  we investigate the feasibility of reusing the optimized masks of repeating patterns to avoid redundant iterations of OPC. 
There are three obstacles standing in the way of our idea:
\begin{enumerate*}
    \item As shown in \Cref{fig:slicing}, slicing the large design layout into small patterns inevitably result in a location shift of patterns with the same geometric shape. Whether and how can an optimized mask with location shift be reused?
    \item Given the query pattern, how to match a same pattern from a large number of stored ones accurately within an acceptable time? 
    \item How to measure the geometric similarity of patterns with location shift?
\end{enumerate*}
In response to the mentioned three questions, we build a \textbf{dynamic pattern library} with online updating to store and reuse repeating patterns and optimized masks by constructing a dynamic hierarchical graph.
We mathematically prove the shift equivariance property of the lithography process to show the feasibility of mask reuse by calculating the shift of design pattern and calibrating the mask. 
A graph-based approximation nearest neighbor search for pattern matching within a short query time. 
We summarize the contributions of this paper as follows:
\begin{itemize}
    \item We propose a self-adaptive workflow for flexible OPC solver selection.
    \item We prove the feasibility of mask resue to speed up the OPC process for real design patterns and provide an efficient mask shift calibration method in practice.
    \item We generate design patterns embedding by supervised contrastive learning for similarity measurement and pattern matching.
    \item We construct a dynamic pattern library using a hierarchical graph with online update along with a greedy graph-based nearest neighbor search for fast matching.
    \item With experiments on different pattern cases from a real design layout, we proved our framework can reduce over 90\% runtime while still preserving the optimal OPC performance.
\end{itemize}
\begin{figure}
    \centering
    \includegraphics[width=0.9\linewidth]{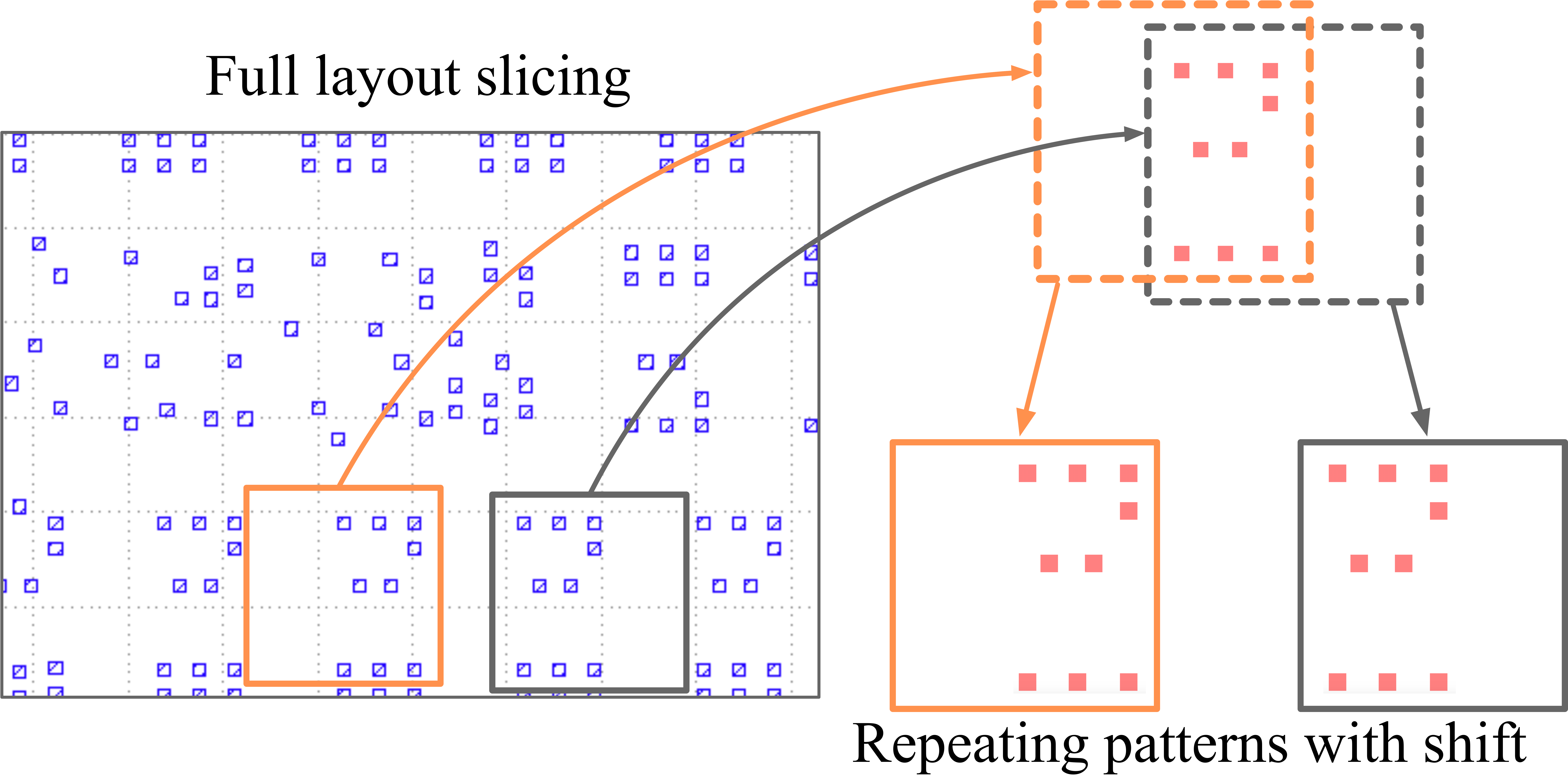}
    \caption{Slicing repeating full layout inevitably causes some location shift on repeating patterns.}
    \label{fig:slicing}
\end{figure}

\section{Preliminaries}

\subsection{Lithography Simulation Model} \label{litho_model}
During the lithography process, an input mask $\mathbf{M}$ is projected through layers of optical lens onto a wafer plane. The intensity after optical system $\mathbf{I}$, namely the aerial image, leaves a coating on the wafer with photoresist to form the resulting pattern $\mathbf{Z}$. The conventional simulation of the lithography process is composed of 2 consecutive components: optical projection model and photoresist model. 

For the projection process, Hopkins diffraction model \cite{hopkins1951concept} has been widely used to analyze coherent imaging system mathematically. To avoid the computation complexity of the Hopkins model, a singular value decomposition model (SVD)-based approximation has been proposed by \cite{cobb1998fast} and became the mainstream fashion. In the SVD model, the Hopkins diffraction model can be decomposed into a sum of coherent systems based on eigenvalue decomposition:
\begin{equation}
    \mathbf{I}(x,y) = \sum_{k = 1}^{ N^{2}} w_{k} | \mathbf{M}(x,y) \otimes h_{k}(x,y) |^{2}, \quad x,y = 1,2,...N
\end{equation}
where $h_{k}$ is the $k$-th kernel and $w_{k}$ is the corresponding weight of the coherent system. "$\otimes$" denotes the convolution operator. \cite{gao2014mosaic} indicates the $K$-th order approximation:
\begin{equation}
    \mathbf{I}(x,y) \approx \sum_{k = 1}^{K} w_{k} | \mathbf{M}(x,y) \otimes h_{k}(x,y) |^{2},
\end{equation}

We pick $K = 24$ in our experiment. After optical simulation, the lithography intensity $\textbf{I}$ is sent to the photoresist model to generate the final binary pattern $\mathbf{Z}$ with an exposure resist threshold $I_{th}$:
\begin{equation}
    \mathbf{Z}(x,y) = 
    \begin{cases}
        1, & \text{if} \quad \mathbf{I} (x,y) \geq I_{th}, \\
        0, & \text{if} \quad \mathbf{I} (x,y) < I_{th}, 
    \end{cases}
\end{equation}

Several machine learning-based lithography simulation methods have been proposed. 
\cite{watanabe2017accurate} utilized a CNN network to perform a function model determination for resist model simulation. 
\cite{ye2019lithogan} developed a GAN-based LithoGAN, to map the input mask and output resist pattern. 
\cite{shao2020ic} proposed a two-stage DNN-based framework, solving the mask-to-SEM prediction as a domain-transfer problem and using CycleGAN \cite{zhu2017unpaired} to learn the transferring process.

Although DNN models usually have the comparative speed advantage, we choose the Hopkins model for the reason of analyzability. A white box model enables us to analyze the pattern shift equivariance property mathematically during the lithography process.

\subsection{OPC Evaluation Criteria}
\begin{figure}
    \subfloat[]{ \label{fig:epe} \includegraphics[width=.45\linewidth]{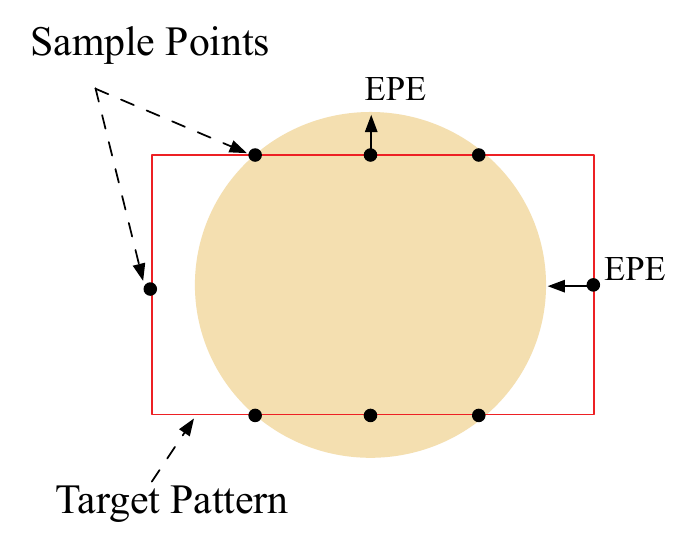} } 
    \subfloat[]{ \label{fig:pvband} \includegraphics[width=.45\linewidth]{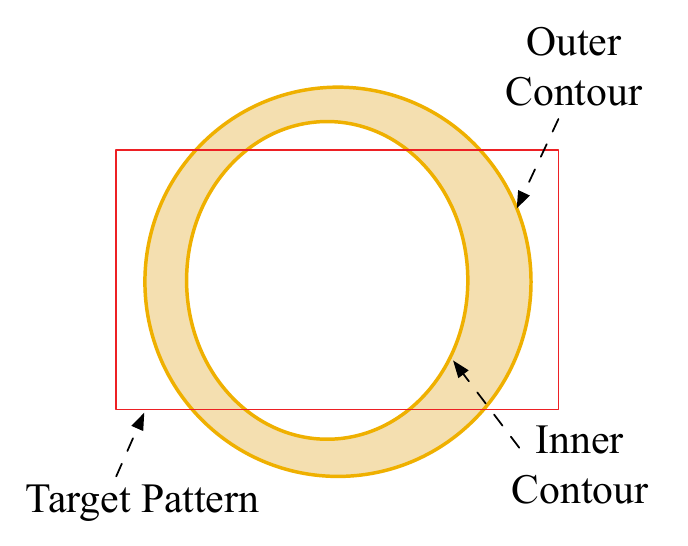} }
    \caption{OPC evluation creteria: (a) Visualization of EPE measurement (b) Visualization of PVBand.}
\end{figure}

\subsubsection{Edge placement error (EPE).}
\enspace After the lithography process, the printed image on the wafer has an inevitable geometric distortion from the design target. Edge placement error (EPE) is a common criterion to quantify distortion level. 
Measurement of EPE is visualized in \Cref{fig:epe}: A series of measuring points are sampled along the boundary of the target design pattern, including vertical edges and horizontal edges. If the distance $D$ between printed image and target is larger than threshold $th_{EPE}$ at a sample point, we label it as a EPE violation.
\begin{equation}
    EPE\_violation(x,y) = 
    \begin{cases}
        1, & D(x,y) \geq th_{EPE}, \\
        0, & D(x,y) \leq th_{EPE},
    \end{cases}
\end{equation}

\subsubsection{Process Variation Band (PV Band).}
\enspace In real lithography applications, process variation may cause deviation in the final printed images, which possibly leads to printing failure. Given different lithography conditions such as focus/defocus depth and incident light intensity, printed images have various contour results. Process Variation Band (PV Band) is defined as discrepant (XOR) region of innermost and outermost contours as shown in \Cref{fig:pvband} to evaluate printing robustness.
\begin{equation}
    PVBand = \sum_{x,y}^{N^2} | \textbf{Z}_{out} - \textbf{Z}_{in} | ,
\end{equation}
where $N$ is the size of pattern. $\textbf{Z}_{out}$ denotes the printed pattern of outer contour and $\textbf{Z}_{in}$ denotes the inner contour. 

\section{Adaptive Framework}
\begin{figure*}
    \centering
    \includegraphics[width=0.99\linewidth]{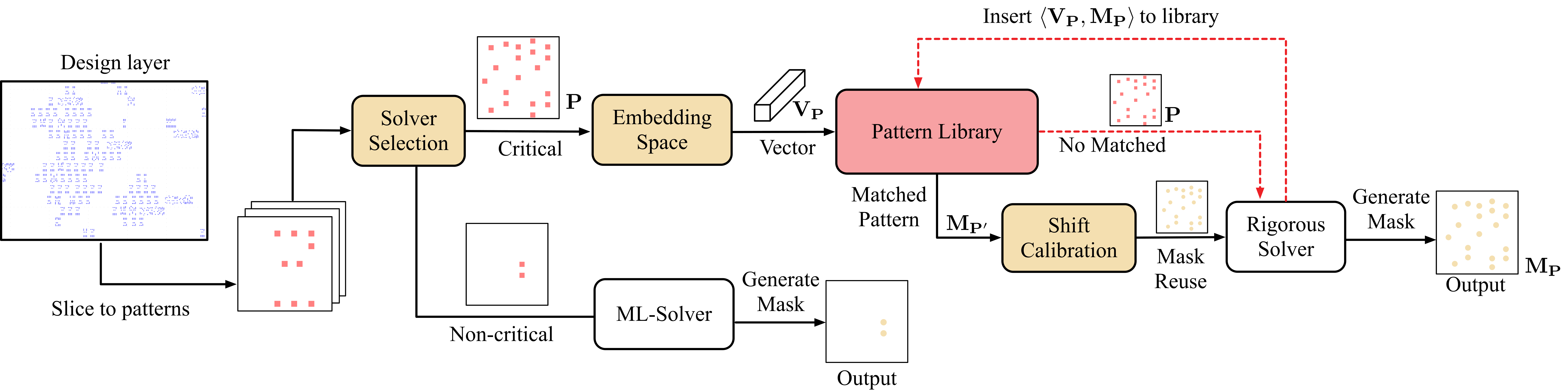}
    \caption{Overall workflow of AdaOPC. Colored blocks are functional modules. Red dashed lines represent library update flow.}
    \label{fig:workflow}
\end{figure*}
\subsection{Workflow Overview}
Our proposed workflow is visualized in \Cref{fig:workflow}.
In \Cref{solverpool}, we firstly introduce the extensible solver selection module to choose OPC solver for different patterns. 
In \Cref{library} we will demonstrate the dynamic graph-based Pattern library, one of the key components of AdaOPC framework for mask Optimization efficiency improvement. 
An approximate nearest neighbor searching(ANNS) method will be utilized to match similar patterns. 
\Cref{embedding} specified how we embed patterns into high dimensional vectors for pattern matching in the library using supervised contrastive learning.
\Cref{shift} will further discuss the mask reusability and requirement. A proof of the shift equivariance during lithography is provided to validate the feasibility, along with the solution by shift calibration.


\subsection{Extensible OPC Solver Selection} \label{solverpool}
Our framework maintains a flexible solver pool to select suitable OPC solutions for different patterns based on their complexity. We divide sliced design patterns into two groups: critical and non-critical patterns. Then we use a solver selector to choose which OPC solver to use. This Solver selector can be regarded as a 2-class classifier and built with a simple deep learning classification model. We use Resnet-18 \cite{he2016deep} as backbone network with objective to minimize the cross-entropy loss $L$:
\begin{equation}
    L = - \frac{1}{N} \sum_{i}^{N} y_{i}\log(p_{i}) + (1 - y_{i})\log(1 - p_{i}) ,
\end{equation}
where $y_{i}$ is the 1/0 label for critical pattern of sample $i$, $p_{i}$ is the probability predicted by classifier model. The model is trained to predict $p_{i}$ as close as $y_{i}$. Without bells and whistles, such simple network + loss combination is capable of performing a fast and accurate prediction on pattern class.

For non-critical patterns, we build our ML-Solver using a generative neural network model, consistent with one of the SOTA ML-based OPC solvers DAMO-DMG \cite{chen2021damo}. We also pick U-Net++ with residual blocks inserted in the bottleneck as our model structure and train the generative model with the same strategy as \cite{chen2021damo}. The only difference is the training data. We prepare our own training dataset with patterns from a real full-scale design and mask generated by a robust OPC engine, of which the lithography model is an authentic one instead of a DNN simulator as in \cite{chen2021damo}. Such data preparation aligns with the real OPC scenario, where the only ground truth we have is the lithography model.

For critical patterns, we use rigorous numerical optimization method as \cite{gao2014mosaic} with GPU acceleration by CUDA, despite that deep learning approaches already reached good performance for an overall evaluation on some test sets of patterns. Data-driven black-box deep learning model may learn to mimic and reverse the diffraction effect very well. Nevertheless, they might have difficulty dealing with optical interference of incident light caused by complex neighboring components. 
In that case, the rigorous numerical optimization solver provides an analytical solution regardless of the geometric complexity of patterns.
Moreover, in a real OPC scenario with a new design and even possibly a new lithography engine, patterns and optimized masks generated by robust methods can be the dataset to train the machine learning model to adapt to new settings. 

Note that the solver pool is extensible. Any powerful OPC solution with certain strengths for certain patterns holds the possibility to be imported as a replacement or complementary candidate. If more than two solvers are in the solver pool, we can simply modify the classifier loss:
\begin{equation}
    L = -\frac{1}{N}\sum_{i}^{N}\sum_{c=1}^{C} y_{ic}\log(p_{ic}) ,
\end{equation}
where $C$ is the number of pattern classes, the same as the number of corresponding OPC solvers. $y_{ic}$ is the 1/0 label for whether this pattern belongs to class $c$. In this way, we can simply transform the problem into a multi-classification case.
\section{Dynamic Pattern Library}\label{library}
After filtering out simple cases, we need to apply a rigorous solver for critical patterns. For further efficiency improvement, we build a dynamic pattern library to store the pattern pairs: sliced pattern $\mathbf{P}$ and their corresponding post-OPC mask $\mathbf{M}_{\mathbf{P}}$, hence to reuse the mask of repeating patterns to avoid the redundant time consumption of iterations-from-scratch OPC process. The key idea is to identify a stored repeating pattern before OPC. An online update mechanism enables a brand new pattern and corresponding mask to be inserted into the library.

We construct the pattern library with a graph structure as inspired by \cite{malkov2018efficient}, where each node represents a pattern stored. Each edge connecting two nodes shows they are neighbors of each other with more similarity. Considering the number of patterns on a full design layout, the size of the graph is gigantic. Naive search for shortest distance requires pair-wise distance comparisons of nodes in the graph, which is impractical. We improve the matching time efficiency with:
\begin{itemize}
    \item Sparse neighborhood graph structure, where nodes distant from each other are sparsely connected.
    \item Graph is divided into hierarchical layers. Nodes have restricted degree at each layer. more edges reside at lower layers, enabling greedy search of nearest neighbor at each layer.
\end{itemize}
The visualization of the hierarchical sparse graph structure is in \Cref{fig:hnsw}. 

\subsection{Pattern Matching And Online Update.}
The task of matching pattern with the most similar geometric shape can be regarded as a nearest neighbor search problem (NNS). As inspired by \cite{malkov2018efficient}, we utilize the Hierarchical Navigate Small World (HNSW) algorithm for fast matching. As shown in \Cref{fig:hnsw}, the pattern matching follows a greedy strategy to traverse the graph from higher layers to the bottom layer. A list of nearest pattern node candidates is maintained through the top-down traversal. The list is updated when a closer pattern appears during searching that has a distance shorter than one of the candidates. Such matching strategy is based on proximity graph nearest neighbors search. The detailed pattern matching search strategy at each hierarchical layer is illustrated in \Cref{alg:matching Strategy}.
\begin{algorithm}
    \small
    \caption{Graph-Based Pattern Matching Greedy Search}
    \label{alg:matching Strategy}
    \begin{algorithmic}[1]
    \State {\textbf{Input:} query pattern $\mathbf{P}$, starting nodes $q_{s}$, number of nearest neighbor to return $k$,layer number $l$, distance measurement $d(\cdot)$} 
    \State {\textbf{Output:} nearest pattern candidates $C$}
    \State {$V \leftarrow q_{s}$} // Visited nodes
    \State {$W \leftarrow q_{s}$} // Waiting list of nodes to visit
    \State {$C \leftarrow q_{s}$}
    \While {$\lvert W \rvert > 0$}
        \State {$q^{*} \leftarrow$ nearest pattern from $W$ to $\mathbf{P}$}
        \State {$q_{f} \leftarrow$ furthest pattern from $C$ to $\mathbf{P}$}
        \If {$d(\mathbf{P}, q^{*}) > d(\mathbf{P}, q_{f})$}
        \State {break}
        \EndIf
        \For {$e \in neighbor(q^{*})$ in layer $l$}
        \If {$e \not\in V$}
        \State {$V \leftarrow V \cup \{e\}$}
        \State {$q_{f} \leftarrow$ furtherest pattern from $C$ to $\mathbf{P}$}
        \If {$d(\mathbf{P}, e) < d(\mathbf{P},q_{f})$ or $\lvert C \rvert < k$}
        \State {$W \leftarrow W \cup \{e\}$}
        \State {$C \leftarrow C \cup \{e\}$}
        \If{$\lvert C \rvert > k$}
        \State {remove furthest pattern from $C$ to $\mathbf{P}$} 
        \EndIf
        \EndIf
        \EndIf 
        \EndFor
    \EndWhile
    \end{algorithmic}
\end{algorithm}

After reaching the bottom layer, pattern candidates in list $C$ with a distance smaller than a threshold $\sigma$ are regarded as matched patterns. If the smallest distance in $C$ is still larger than $\sigma$, we regard it as a new pattern. This approach remains fast and accurate even when the graph grows large with new patterns continuously being inserted in the library.

The pattern library updates in an online style. For every new pattern that comes with no matched ones, the mask is optimized with from-scratch OPC iterations. The library will insert the pattern and optimized mask as new node and update the edge hierarchy of the graph to store the pattern. \Cref{alg:library update} shows the library online update step details.
\begin{figure}
    \includegraphics[width=0.98\linewidth]{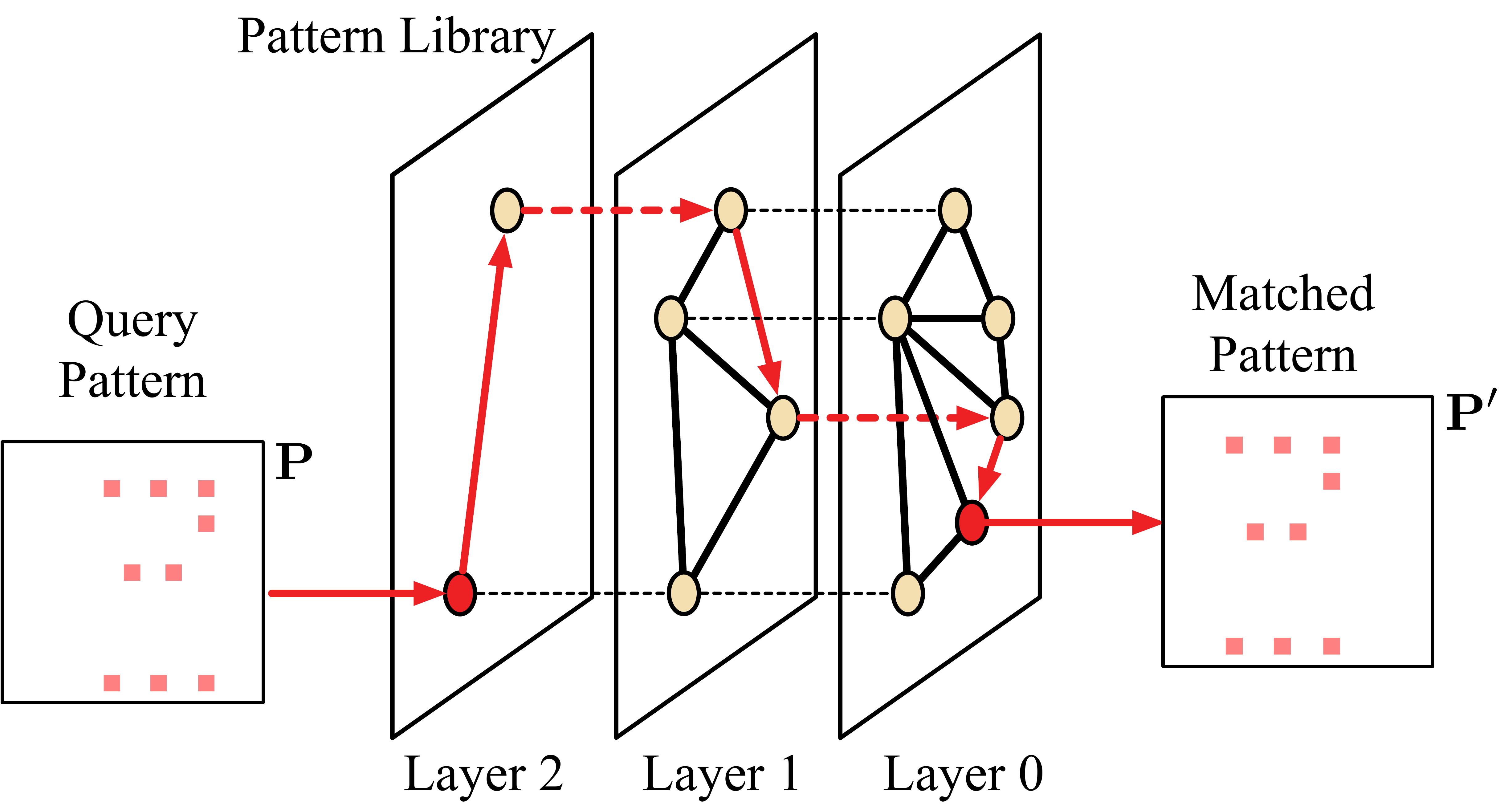} 
    \caption{Visualization of the graph-based pattern matching flow. Query design pattern $\mathbf{P}$ greedily traversing the hierarchical graph. The nearest node reached at layer 0 corresponds to a match pattern $\mathbf{P'}$ which has the most similar geometric shape with $\mathbf{P}$.}
    \label{fig:hnsw}
\end{figure}
\begin{algorithm}
    \small
    \caption{New Pattern Insertion and Graph Update}
    \label{alg:library update}
    \begin{algorithmic}[1]
        \State {\textbf{Input:}hierarchical graph $G$, new pattern $\mathbf{P}$, total layer number $L$, $G$'s starting nodes $q_{s}$, max degree $M$}
        \State {\textbf{Output:} updated hierarchical graph $G$}
        \State {$l \leftarrow random(0,L)$ //exponentially decaying probability}
        \For {$l_{c} \leftarrow L,..l$}
        \State {$C \leftarrow search(P, q_{s}, k, l_{c})$} \Comment{\Cref{alg:matching Strategy}}
        \State {$q_{s} \leftarrow$ nearest pattern of $q$ in $C$}
        \EndFor
        \For {$l_{c} \leftarrow l,.. 0$}
        \State {insert $\mathbf{P}$ to layer $l_{c}$ of $G$ // add $\mathbf{P}$ into graph} 
        \State{C $\leftarrow search(P, q_{s}, k, l_{c})$ } \Comment{\Cref{alg:matching Strategy}}
        \State{$ neighbors(\mathbf(P)) \leftarrow$ top $M$ nearest patterns in $C$}
        \For {$e \leftarrow neighbors(\mathbf{P})$}
        \State {add edge $(P, e)$}
        \If {degree of $e > M$}
        \State {$neighbors(e) \leftarrow$ top k nearest patterns connecting $e$}
        \State {remove all edges connecting $e$}
        \State {create edges $e$ with each one in $neighbors(e)$}
        \EndIf
        \EndFor
        \EndFor 
    \end{algorithmic}
\end{algorithm}

As \Cref{alg:library update} indicates, the new pattern will be inserted to one of the hierarchical layers with decaying probability. Edges will be added at the same layer between this pattern and top $k$ nearest ones. As the neighbor nodes will have degree increase, an edge re-connection of these neighbors will be conducted once degree is above the upper bound $k$. Therefore, the degree of each node in the graph is restricted by $k$. Note that the number of edges results in the matching complexity. Such a sparse hierarchical graph enables a fast search when the size of graph grows up. 

Since vectors of the same pattern are closer to each other, we proposed some distance metrics to evaluate the similarity of vectors. The inner product of two vectors can be used to evaluate the direction difference of vectors:
\begin{equation}
    Inner(\mathbf{V}_{\mathbf{P}_{1}},\mathbf{V}_{\mathbf{P}_{2}}) = \mathbf{V}_{\mathbf{P}_{1}} \cdot \mathbf{V}_{\mathbf{P}_{2}} = \sum_{i = 0}^{k} V_{\mathbf{P}_{1},i} V_{\mathbf{P}_{2},i} ,
\end{equation}
where $\mathbf{V}_{\mathbf{P}_{1}}$ and $\mathbf{V}_{\mathbf{P}_{2}}$ are the embedded vectors of pattern $\mathbf{P}_{1}$ and $\mathbf{P}_{2}$. $k = 256$ is the dimension of the embedded vector. Note that the inner product violates the positivity property where an element can be closer to some other element than to itself. 

\textbf{Consine similarity} avoid the violation thus can be utilized to measure the similarity between two vectors of an inner product space:
\begin{equation}
\begin{aligned}
    d_{Cosine}(\mathbf{V}_{\mathbf{P}_{1}},\mathbf{V}_{\mathbf{P}_{2}})
    &= 1.0 - \frac{\mathbf{V}_{\mathbf{P}_{1}} \cdot \mathbf{V}_{\mathbf{P}_{2}}}{ \left\lVert\mathbf{V}_{\mathbf{P}_{1}}\right\lVert \left\lVert\mathbf{V}_{\mathbf{P}_{2}}\right\lVert}, \\[6 pt]
    &= 1.0 - \frac{\sum_{i = 0}^{k} V_{\mathbf{P}_{1},i} V_{\mathbf{P}_{2},i}}{\sqrt{\sum_{i = 0}^{k} V_{\mathbf{P}_{1},i}^2 } \sqrt{\sum_{i = 0}^{k} V_{\mathbf{P}_{2}, i}^2}},
\end{aligned}
\end{equation}

\textbf{Euclidean Distance} is another approach where the embedding metric space is regarded as a Euclidean Space. Each vector represents its position in the Cartesian coordinates. The similarity of two vectors can be evaluated directly by calculating the squared $\mathscr{l}$-2 norm of the difference of the coordinates: 
\begin{equation}
    d_{Euclid}(\mathbf{V}_{\mathbf{P}_{1}},\mathbf{V}_{\mathbf{P}_{2}}) = \left\| \mathbf{V}_{\mathbf{P}_1} - \mathbf{V}_{\mathbf{P}_2} \right\|_{2}^{2} = \sqrt{\sum_{i=0}^{k} (V_{\mathbf{P}_{1},i} - V_{\mathbf{P}_{2},i})^2 }.
\end{equation}
All metrics abide by the rules of nearest neighbor search (NNS) are feasible similarity measurement metrics, which leave possibility for exploration of various metrics according to different embedding space. 

\subsection{Embedding Space Construction}\label{embedding}
To reuse mask stored in library we need to match a pattern with same geometric shape. However it is not straighforward to compare geometric similarity of two patterns directly. We develop a embedding metric space which reflects the geometric property using a high-dimensional vector representation $\mathbf{V}_{\mathbf{P}}$. Original $\langle \mathbf{P}, \mathbf{M}_{\mathbf{P}}\rangle$ pair stored in library is replaced with $\langle \mathbf{V}_{\mathbf{P}}, \mathbf{M}_{\mathbf{P}} \rangle$ pair. In this way, a decision of whether two patterns are the same can be determination by a similarity metric of two embedded vectors.

We transform the embedding space construction into a feature extraction process by using deep learning model, and the metric space is built through deep metric learning. The embedded vector is the output of an embedding neural network. Deep learning model for the embedding process is composed of two modules: 
\begin{itemize}
    \item  \textit{Encoder}, $Enc(\cdot)$. For each input pattern $\mathbf{P}$, the encoder will encode the input pattern to a feature map $\mathbf{F}_{P} \in \mathbb{R}^{h\times w\times c}$, where $h, w$ are the spatial size of the feature map $\mathbf{F}_{P}$ and $c$ is the number of channels. 
    \item \textit{Projector}, $Proj(\cdot)$, which embeds feature map $\mathbf{F}_{P}$ to a representation vector $\mathbf{V}_{\mathbf{P}} \in \mathbb{R}^{k}$. The output $Proj(\mathbf{F}_{P})$ is normalized to the unit hypersphere in $\mathbb{R}^{k}$ at training stage for loss calculation. 
\end{itemize}
Therefore, the embedding process is formulated as:
\begin{equation}
    \mathbf{V}_{\mathbf{P}} = Proj(Enc(\mathbf{P})) \in \mathbb{R}^{k} .
\end{equation}
Previous deep learning-based OPC approaches \cite{yang2019gan, ye2019lithogan, chen2021damo} chose UNet or special variant UNet++ as the backbone structure. A harsh requirement of OPC problem strongly limits the selection of network backbone structure: the output mask will necessarily remain the same resolution as the input design.  

Embedding process without such limitation leaves us the flexibility to more network structure candidates.
We deliberately choose one of the most common structures: Resnet-18 \cite{he2016deep} as encoder. Each input pattern $\mathbf{P}$ is a $2048 \times 2048$ 2-D picture. 
In order to restrict the heavy computation caused time delay, we downsample the pattern  in a greedy manner until 256*256 before sending it into the neural network, without noticeable performance degradation. 
After the original Resnet-18 structure, an extra $1\times1$ convolution layer follows to shrink the feature channel size from 512 to 256.
A linear layer is put at the end of the neural network to transform the 3-D feature into the final 1-D embedded vector $\mathbf{V}_{\mathbf{P}}$. 
The size of $\mathbf{V}_{\mathbf{P}}$ is a trade-off where larger size indicates higher matching accuracy but slower similarity computation and matching speed. 
We select 256 through experiments to guarantee good performance with neglectable matching time.

The embedding space $\mathbf{S}$ is specially designed with certain objectives:
\begin{enumerate*}
    \item patterns of the same shape share similar embedded vectors with the shortest distance.
    \item patterns of different shapes clustered sparsely in the embedding space, far from each other.
\end{enumerate*}
The training process of such embedding requires abundant data of different patterns as well as data of the same patterns. During training, the data from the same pattern is regarded as positive samples, and the embedded vectors will be pushed as close as possible with higher similarity, while the embedded vectors of different patterns are regarded as negative samples pushed as far as possible. We crop a great amount of patches of pattern from a real full-scale design for the training dataset. 

\textbf{Data Preparation.} 
The cropping process has two steps to generate positive samples and negative samples as required by the training objective. The first step is to randomize some anchor points along the design layer. In the second step, with a random shift around each anchor point, we crop a certain number of patches, which will hold the same pattern within the square patch but with different relative positions. For each batch of training data, patterns will be labeled by their anchor point and thus abides by the positive/negative samples requirement. 

\textbf{Supervised Contrastive Loss.}
As we draw the picture of training the neural network to learn how to embed patterns to representative vectors, the traditional cross-entropy loss is not sufficiently sensitive to handle inter-class distance or noise labels. Among the family of losses based on metric distance learning \cite{hadsell2006dimensionality, wu2018unsupervised, hjelm2018learning}, Contrastive loss \cite{chen2020simple} is one of the most powerful losses for learning representative embedding in the self-supervised learning domain. Inspired by \cite{khosla2020supervised}, we extend the contrastive loss to supervised contrastive loss as all the positive/negative samples are genuinely generated and labeled at data preparation stage. As mentioned, representation vector $\mathbf{z}$ are normalized from $\mathbf{V}_{\mathbf{P}}$:
\begin{equation}
    \mathbf{z} = \text{normalize}(Proj(Enc(\mathbf{P})) ) \in \mathbb{R}^{k}  ,
\end{equation}
Then the loss function is formulated as:
\begin{equation} 
    \\[2pt]
     \mathcal{L}_{supCon} = -\sum_{i \in I} \frac{1}{|J(i)|} \sum_{j \in J(i)} \log \frac{\exp (\mathbf{z}_{i} \cdot \mathbf{z}_{j}/\tau)}{ \sum_{a \in A(i)} \exp(\mathbf{z}_{i} \cdot \mathbf{z}_{a} / \tau) } ,
\end{equation}
where $i \in I$ is anchor indices of the training batch. $j \in J$ is the anchor indices of the positive samples. $A(i) = I \backslash \{i\}$ is all anchor indices except for $i$ in this batch and therefore $A(i) \backslash \{J(i)\}$ are the anchor indices of negative samples. $\tau$ is a scalar temperature parameter. Term $\exp (\mathbf{z}_{i} \cdot \mathbf{z}_{j}/\tau)$ in numerator denotes similarity of positive sample pairs $\mathbf{z}_{i}$ and $\mathbf{z}_{j}$. $\exp (\mathbf{z}_{i} \cdot \mathbf{z}_{a}/\tau)$ in denominator denotes similarity of all sample pairs including negative ones. By minimizing the loss, the training process enlarges the similarity of positive samples and reduces the similarity of negative samples.

\section{Mask Reuse With Shift Calibration}\label{shift}

\subsection{Mask Reusability}
We assume that repeating patterns can share mask for efficiency if the query design pattern $\mathbf{P}$ has a matched design pattern $\mathbf{P'}$ stored in the library with the same shape. However, \Cref{fig:slicing} has shown that when the whole design is sliced into small patterns, it is inevitable to find pattern location shift $(\Delta x, \Delta y)$ between $\mathbf{P}$ and $\mathbf{P'}$. In real lithography and OPC flow, if no surrounding factors affect lithography, the printed wafer image patch must have no distortion but only an identical shift to the design pattern, as visualized in \Cref{fig:shift}. Therefore, if we want to reuse the mask, the first requirement is to make sure that location shift during lithography will not cause any geometric distortion.

We mathematically prove the location shift remains unchanged before and after the lithography in order to show the feasibility of the mask shift calibration approach, we denote the Hopkins diffraction model through lithography in \Cref{litho_model} as $Litho(\cdot)$ and the location shift as $\delta_{\Delta x, \Delta y}(\cdot)$, we show that:

\begin{mytheorem}[Shift Equivariance]
    \label{shift_theorem}
    Given pattern $\mathbf{P}$ and mask $\mathbf{M}_{\mathbf{P}}$ where
    \begin{equation}
        \mathbf{P} = Litho(\mathbf{M}_{\mathbf{P}}) .
    \end{equation}
     The following statement always holds: 
    \begin{equation}
        \label{litho_equivariance}
        \delta_{\Delta x, \Delta y}(\mathbf{P}) = Litho(\delta_{\Delta x, \Delta y}(\mathbf{M}_{\mathbf{P}})).
    \end{equation}
\end{mytheorem}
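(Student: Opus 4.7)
The plan is to verify shift equivariance by pushing the translation operator $\delta_{\Delta x, \Delta y}$ through each stage of the lithography pipeline defined in \Cref{litho_model}, namely the per-kernel optical convolution, the pointwise modulus squared, the weighted sum across kernels, and finally the pointwise photoresist threshold. Since the composition of shift-equivariant operators is shift-equivariant, it suffices to check the components individually.

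First I would establish the core lemma that 2-D convolution commutes with translation. Writing $g = \delta_{\Delta x, \Delta y}(\mathbf{M}_{\mathbf{P}})$, so that $g(x,y) = \mathbf{M}_{\mathbf{P}}(x-\Delta x, y-\Delta y)$, a change of variables inside the convolution integral (or summation, in the discretized setting) with $h_k$ immediately gives
\begin{equation}
(g \otimes h_k)(x,y) = (\mathbf{M}_{\mathbf{P}} \otimes h_k)(x-\Delta x, y-\Delta y),
\end{equation}
i.e., $\delta_{\Delta x,\Delta y}(\mathbf{M}_{\mathbf{P}}) \otimes h_k = \delta_{\Delta x,\Delta y}(\mathbf{M}_{\mathbf{P}} \otimes h_k)$. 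This is the only non-cosmetic step.

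Next I would observe that both the modulus squared $|\cdot|^2$ and the photoresist threshold $\mathbb{1}[\,\cdot\, \geq I_{th}]$ are pointwise maps, so they automatically commute with any spatial translation. The weighted sum $\sum_k w_k(\cdot)$ is linear and acts position-wise across $k$, hence also commutes with $\delta_{\Delta x, \Delta y}$. Chaining these observations gives
\begin{equation}
Litho(\delta_{\Delta x, \Delta y}(\mathbf{M}_{\mathbf{P}})) = \delta_{\Delta x, \Delta y}(Litho(\mathbf{M}_{\mathbf{P}})) = \delta_{\Delta x, \Delta y}(\mathbf{P}),
\end{equation}
which is exactly \Cref{litho_equivariance}.

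The hard part is not the mathematics, which is essentially one application of translation-invariance of convolution, but rather being precise about the domain on which signals live. Strict shift equivariance requires treating $\mathbf{M}_{\mathbf{P}}$ and the kernels as functions on all of $\mathbb{R}^2$ (or sufficiently padded $\mathbb{Z}^2$); otherwise finite-window truncation introduces boundary artefacts that break the equality near the edges of the patch. I would therefore state the assumption that the sliced patch size is chosen large enough that the optical kernels $h_k$ have negligible support beyond the shifted window, which justifies ignoring boundary effects in practice and makes the calibration by $(\Delta x, \Delta y)$ used later in the section mathematically sound.
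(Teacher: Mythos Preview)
Your proposal is correct and is essentially the same argument as the paper's own proof: both reduce the claim to the translation-invariance of discrete 2-D convolution with each optical kernel $h_k$ (established by a change of variables in the inner summation), after which the pointwise modulus squared and the weighted sum over $k$ trivially preserve the shift. Your explicit treatment of the photoresist threshold and the caveat about finite-window boundary effects are welcome refinements that the paper's proof leaves implicit, but the mathematical core and route are identical.
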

\begin{proof}
    For any position (x,y) on pattern $\mathbf{P}$:
    \begin{equation}
        \begin{aligned}
            &\delta_{\Delta x, \Delta y}(\mathbf{P}(x,y)) = \mathbf{P}(x + \Delta x,y + \Delta y), \\
            &= \sum_{k = 1}^{N^2} w_{k} \left| h_{k}(x + \Delta x,y + \Delta y) \otimes \mathbf{M}_{\mathbf{P}}(x + \Delta x,y + \Delta y) \right|^{2}, \\
            &= \sum_{k = 1}^{N^2} w_{k} | \sum_{i=1}^{N} \sum_{j=1}^{N} h_{k}(i,j) \mathbf{M}_{\mathbf{P}}(x + \Delta x + i - \frac{N}{2}, y + \Delta y + j - \frac{N}{2})|^{2}, \\[4pt]
            &= \sum_{k = 1}^{N^2} w_{k} | \sum_{i=1}^{N} \sum_{j=1}^{N} h_{k}(i,j) \mathbf{M}_{\mathbf{P}}(x + i - \frac{N}{2}+ \Delta x, y + j - \frac{N}{2} + \Delta y)|^{2}, \\[4pt]
            & = \sum_{k = 1}^{N^2} w_{k} | \sum_{i=1}^{N} \sum_{j=1}^{N} h_{k}(i,j) \delta_{\Delta x, \Delta y}(\mathbf{M}_{\mathbf{P}}(x + i - \frac{N}{2}, y + j - \frac{N}{2}))|^{2}, \\[4pt]
            & = \sum_{k = 1}^{N^2} w_{k} \left| h_{k}(x,y) \otimes \delta_{\Delta x, \Delta y}(\mathbf{M}_{\mathbf{P}}(x,y)) \right|^{2}, \\[4 pt]
            & = Litho(\delta_{\Delta x, \Delta y}(\mathbf{M}_{\mathbf{P}}(x,y))) .
        \end{aligned}
    \end{equation}
    Then \Cref{litho_equivariance} is proved.
\end{proof}
\begin{figure}
    \includegraphics[width=0.95\linewidth]{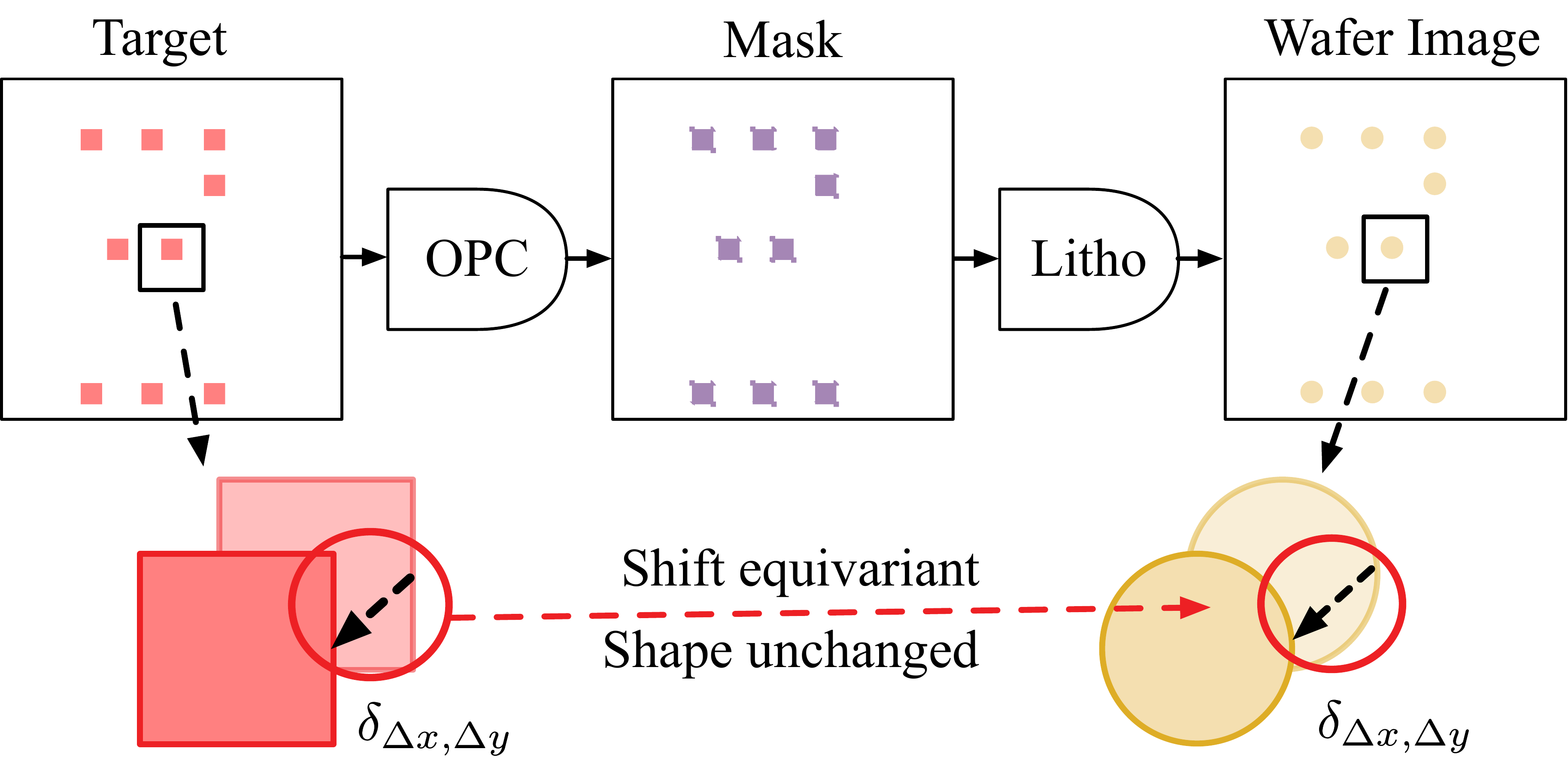} 
    \caption{Printed wafer image must share identical location shift to design pattern with no geometric shape distortion.}
    \label{fig:shift}
\end{figure}

Since mask shift will only result in a printing shift after lithography, repeating patterns in design can share OPC-optimized masks with a simple shift correction. We pick matched mask $\mathbf{M}_{\mathbf{P'}}$ stored in pattern library and add correction $(-\Delta x, -\Delta y)$ to acquire the initial mask $\mathbf{M}_{\mathbf{P}}$ for $\mathbf{P}$.

\subsection{Pattern Shift Calibration}
We calculate the shift by computing the pixel-level similarity of two patterns  $\mathbf{P}$ and $\mathbf{P'}$. The pixel-wise \textbf{cross-correlation} of $\mathbf{P}$ and $\mathbf{P'}$ reflects the pixel-wise similarity where the pixel of highest response value on the correlation map is the position shift of center point $(x_{ctr}, y_{ctr})$. The cross-correlation computation of two large 2-D pattern is time-comsuming. The calculation process of cross-correlation equal to convolution of $\mathbf{P}$ and $Rotate(\mathbf{P'})$, where $Rotate(\cdot)$ denotes rotation of $180^{\circ}$:
\begin{equation}
    CrossCorr(\mathbf{P}, \mathbf{P'}) = Conv(\mathbf{P}, Rotate(\mathbf{P'})),
\end{equation}
We replace the calculation with Convolution and accelerate the computation with Fast Fourier Transform (FFT) \cite{vasilache2014fast}. The pattern shift can be calculated with:
\begin{equation}
    \begin{aligned}
        x^{*}, y^{*} &= \argmax_{x,y}\  Conv\_FFT (\mathbf{P}, Rotate(\mathbf{P'})), \\
        \Delta x &= x^{*} - x_{ctr},\  \Delta y = y^{*} - y_{ctr},
    \end{aligned}
\end{equation}
And the initial mask is corrected with:
\begin{equation}
    \mathbf{M}_{\mathbf{P}} = \delta_{-\Delta x, -\Delta y}(\mathbf{M}_{\mathbf{P'}}).
\end{equation}
In real application, we send calibrated mask into lithography model to verify the mask and into ILT solver for one or two further iterations if necessary, just in case of any noise caused by shift calibration operation.
We use the same pattern size as \cite{yang2019gan} $2048 \times 2048$. With our implementation, the shift calculation time is less than 0.25s on CPU.  

\section{Experimental Results}

Our framework is mainly developed in Python. All machine learning modules in our framework are implemented using PyTorch. Lithography and ILT modules are developed in C/C++ with CUDA toolkit. All performance and speed experiments are conducted on CentOS-7 system with Intel i7-5930K 3.50GHz CPU and Nvidia GTX Titan X GPU. We choose the public lithography engine from ICCAD 2013 CAD Contest \cite{banerjee2013iccad} with the 24 optical kernels. The photoresist intensity threshold is set at 0.055. We adopt lithography wavelength $193nm$ with defocus range of $\pm 25nm$ and dose range of $\pm 2\%$. EPE violation threshold $th_{EPE}$ is set to $15nm$.

In accordance with the authentic OPC scenario, all data used in our experiments is from a real design which we extracted from a gds file generated by the open-source layout generation tool OpenROAD \cite{ajayi2019openroad}. We sliced patterns of size $2048\times 2048$ in alignment with previous work~\cite{yang2019gan,yu2021gpu,chen2021damo} from a full scale via layer with more than $1.9 \times 10^6$ vias, where each pixel represents $1nm^2$. 
For the training dataset of ML-Solver, we randomly slice 4000 patterns off the design layer with masks optimized by ILT Solver. We use the same patterns to train the pattern classifier. As for critical/non-critical labels, we directly apply lithography and label them by EPE number, which is intuitive as it reflects the mask optimization difficulty. 
For the training data of Metric Space embedding, we follow the steps in \Cref{embedding}, using 400 random anchor points and shift around each anchor point within $\pm10\%$ pattern width for 400 repeating patterns. The number of positive samples for each point should be equal to or larger than the number of anchors to guarantee a positive/negative ratio of each training batch.

\begin{figure}
    \hspace*{-0.3cm}\subfloat[]{ \label{fig:epe1} \pgfplotsset{
    width = 0.8\linewidth,
    height= 0.6\linewidth
}

\begin{tikzpicture}[scale=0.8, style={outer sep=0}]
\begin{axis}[
xmax=13, xmin=0,
ymax=40, ymin=19,
xtick={ 1, 2, 3, 4, 5, 6, 7, 8, 9, 10, 11, 12},
xticklabels={0, 1, 2, 3, 4, 5, 6, 7, 8, 9, 10, 11},
scaled ticks=false,
legend pos=north east,
xticklabel style={/pgf/number format/fixed},
yticklabel style={/pgf/number format/.cd, fixed, fixed zerofill, /tikz/.cd},
xlabel={Num of iterations},
ylabel={\#EPE},
xlabel near ticks,
ylabel near ticks,
legend style={
  draw=none,
  at={(0.4,0.75)},
  anchor=west,
  legend columns=1,
  scale = 1
}
]

\pgfplotstableset{
	create on use/x rel/.style={
		create col/expr={
			 \thisrow{0}
		}
	},
	create on use/y rel/.style={
		create col/expr={
			 \thisrow{1}
		}
	}
}

\addplot[CUribbon, line width=1pt, mark=*, mark options={scale=0.5, fill=CUribbon},text mark as node=true] table [x = x rel, y=y rel] {
    1 37
    2 37
    3 34
    4 27
    5 25
    6 25
    7 23
    8 23
    9 23
    10 23
    11 23
    12 22

};
\addlegendentry{No mask}

\addplot[CUgold, line width=1pt, mark=*, mark options={scale=0.5, fill=CUgold},text mark as node=true] table [x = x rel, y=y rel] {
    1 23
    2 22
    3 22
    4 22
    5 22
    6 22
    7 22
    8 22
    9 22    
    10 22
    11 22
    12 22
};
\addlegendentry{Optimized mask}

\end{axis}
\end{tikzpicture} } 
    \hspace*{-0.4cm}\subfloat[]{ \label{fig:breakdown} \includegraphics[width=.5\linewidth]{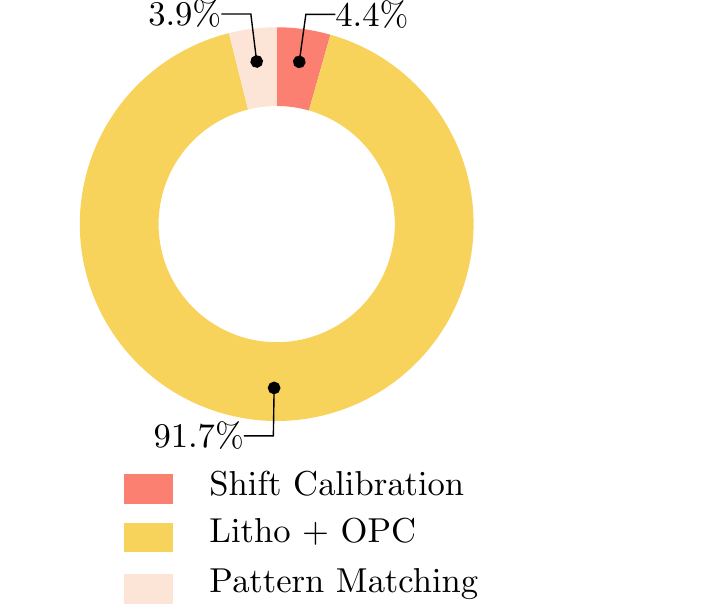} }
    \caption{(a) EPE convergence comparison (b) Runtime breakdown of AdaOPC on critical patterns}
\end{figure}

\begin{table}[tb!]
    \centering
   \renewcommand{\arraystretch}{0.9} 
    \begin{tabular}{c|ccc}
        \toprule
        Library Size & 128-D & 256-D & 512-D\\
        \midrule
        100 & 0.9ms& 1.3ms& 1.5ms\\
        500 & 3.4ms& 5.7ms& 8.8ms\\
        1000 & 9.0ms& 13.3ms& 22.2ms\\
        2000 & 20.4ms& 31.2ms& 52.2ms\\
        5000 & 59.8ms& 93.0ms& 156.6ms\\
        10000 & 130.1ms& 206.5ms& 413.6ms\\
        \bottomrule
    \end{tabular}
    \caption{Pattern matching speed Analysis on different embedding dimension.}
    \label{table:matching_time}
\end{table}

\begin{figure*}[tb!]
    \input{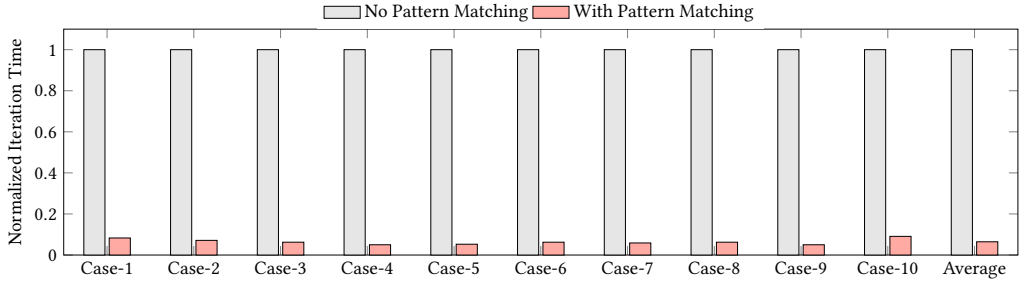} 
    \caption{Mask convergence speed comparison with/without Pattern Matching. The time of mask optimization without pattern matching is normalized to 1 to demonstrate the acceleration ratio clearly. Optimization with pattern matching takes calibrated mask as the initial state. }
    \label{exp:iterations}
\end{figure*}

\begin{table*}[tb!]
    \centering
    \caption{Comparisons of baseline approaches}
    \renewcommand{\arraystretch}{0.9}
    \resizebox{0.68\linewidth}{!}
    {
    \begin{tabular}{c|ccc|ccc|ccc}
        \toprule
         Test Case & \multicolumn{3}{c}{DAMO-DGS \cite{chen2021damo}} & \multicolumn{3}{c}{ILT-GPU \cite{gao2014mosaic}} & \multicolumn{3}{c}{AdaOPC} \\ 
         ID&\#EPE &PVB ($nm^{2}$) & RT (s)&\#EPE &PVB ($nm^{2}$) & RT (s)&\#EPE &PVB ($nm^{2}$) & RT (s)\\ 
        \midrule
         1 &22 &23323 & 5.20&23 &23329 &41.15 &22 &23232 &5.50\\
         2 &26 &26729 & 5.26&25 &26762 &48.5 &24 &26580 &5.41\\
         3 &27 &26938 & 5.22&24 &26720 &55.92 &24 &26718 &5.37\\
         4 &36 &27975 & 5.18&29 &28127 &70.57 &25 &27934 &5.40\\
         5 &35 &28805 & 5.32&30 &28925 &66.89 &30 &28927 &5.44\\
         6 &30 &26960 & 5.31&25 &26762 &55.81 &24 &26775 &5.38\\
         7 &33 &26382 & 5.23&28 &26453 &59.47 &28 &26281 &5.43\\
         8 &32 &30646 & 5.38&25 &29450 &54.88 &27 &29341 &5.42\\
         9 &25 &24054 & 5.25&24 &24053 &70.62 &23 &24022 &5.43\\
         10 &24 &21939 & 5.29&23 &21701 &37.59 &22 &21644 &5.53\\
         \midrule
         Avg. &29.0 &26375 &\textbf{5.26} &25.6 &26228 &56.14 &\textbf{24.9} &\textbf{26145} &5.43\\ 
         Ratio &1.165 &1.009 &\textbf{0.970} &1.028 &1.003 &10.340 &\textbf{1.000} &\textbf{1.000} &1.000 \\
        \bottomrule
    \end{tabular}
    }
    \label{tab:main result}
\end{table*}

\subsection{Performance Analysis}
In the begining, we validate the effectiveness of mask reuse by observing EPE descending trend. Firstly we present a demo experiment on a pattern, recording the EPE descending trend of the mask during ILT iterations with a calibrated optimized mask as the initial state. In comparison, we also record the trend with no initial mask and start the ILT process from scratch. As \Cref{fig:epe1} shows, EPE number with an initial mask starts at nearly optimal 23, and the descending trend converged at 1st iteration. In contrast, ILT from scratch gives 37 EPEs initially and takes six iterations to reach the initial EPE number with mask reuse, and overall 12 iterations to converge to 22. 

To verify the efficiency of our framework, we experiment with the runtime analysis. Since non-critical patterns are handled by extremely fast machine learning-based methods, we mainly focus on critical patterns. \Cref{fig:breakdown} visually demonstrates the time cost proportion of each step for critical pattern OPC in AdaOPC. We can see that 91.7\% running time is spent on lithography and ILT OPC iterations while the time overhead of pattern matching and shift calibration combined is only 8.3\%, which is trivial to the whole process. This also verifies the extensibility of our framework, leaving room for new powerful and fast OPC tools or litho-model to be imported for further speed up. Moreover, we also considered the condition when the pattern library gets larger. Although we cannot generate too many "ground-truth" masks due to time and computation resource limits, we can test the pattern matching speed with a large number of synthesized pattern vectors. \Cref{table:matching_time} has shown the matching speed of different pattern library sizes and embedding vector dimension combinations, from which we can see that even with the pattern library enlarged to 10000 stored patterns with dimension 512, the query and matching can still be finished in 0.4s. The time overhead is fast enough to be ignored in the whole process. \Cref{exp:iterations} shows the time comparison for mask convergence speed with or without pattern, 10 cases were tested after 800 patterns were inserted into the pattern library. Iterations of mask updates required can be significantly reduced by 93.6\% on average. 

At last, we need to verify the overall performance of our AdaOPC framework by comparing it with two baseline methods that we used in different branches for critical/non-critical patterns. After inserting 800 patterns with optimized masks into the library, we randomly tested 10 patterns, and \Cref{tab:main result} shows that AdaOPC can achieve comparable EPE/PVBand performance with ILT-GPU approach with $10\times$ acceleration and no accuracy loss. On the other hand, when compared with DAMO-DMG with 1 round lithography verification, AdaOPC shows much better performance with comparable speed. Notice that in our case, the embedding vector dimension is 256. As shown in \Cref{table:matching_time}, even if the library gets large to 10000 patterns, the matching time is still around 0.2s, which is marginal to the complete OPC process time cost shown in \Cref{tab:main result}. To summarize, the AdaOPC framework achieves performance-speed dual optimization.
\section{Conclusion}
In this paper, we proposed a self-adaptive OPC framework for mask optimization for real designs by inspecting the characteristics of a full design. We proposed an extensible OPC solver selector to choose an appropriate solver for patterns with different complexity. Additionally, we also built a dynamic pattern library to reuse optimized masks for repeating patterns with the same geometric shape. We use supervised contrastive learning to embed patterns into vectors and propose a graph-based search strategy for fast pattern matching. At last, we validate the mask reusability by proving pattern shift equivariance property and proposed a practical shift calibration tool. Extensive experiments have shown our frame can achieve OPC speed-robustness co-optimization for real design patterns. 
\section*{Acknowledgments}
This work is supported The Research Grants Council of Hong Kong SAR (Project No.~CUHK14208021).

\clearpage
{
    \bibliographystyle{IEEEtran}
    \bibliography{ref/opc} 

\begin{thebibliography}{10}
\providecommand{\url}[1]{#1}
\csname url@samestyle\endcsname
\providecommand{\newblock}{\relax}
\providecommand{\bibinfo}[2]{#2}
\providecommand{\BIBentrySTDinterwordspacing}{\spaceskip=0pt\relax}
\providecommand{\BIBentryALTinterwordstretchfactor}{4}
\providecommand{\BIBentryALTinterwordspacing}{\spaceskip=\fontdimen2\font plus
\BIBentryALTinterwordstretchfactor\fontdimen3\font minus
  \fontdimen4\font\relax}
\providecommand{\BIBforeignlanguage}[2]{{%
\expandafter\ifx\csname l@#1\endcsname\relax
\typeout{** WARNING: IEEEtran.bst: No hyphenation pattern has been}%
\typeout{** loaded for the language `#1'. Using the pattern for}%
\typeout{** the default language instead.}%
\else
\language=\csname l@#1\endcsname
\fi
#2}}
\providecommand{\BIBdecl}{\relax}
\BIBdecl

\bibitem{pan2013design}
D.~Z. Pan, B.~Yu, and J.-R. Gao, ``Design for manufacturing with emerging
  nanolithography,'' \emph{IEEE Transactions on Computer-Aided Design of
  Integrated Circuits and Systems}, vol.~32, no.~10, pp. 1453--1472, 2013.

\bibitem{park2000efficient}
J.-S. Park, C.-H. Park, S.-U. Rhie, Y.-H. Kim, M.-H. Yoo, J.-T. Kong, H.-W.
  Kim, and S.-I. Yoo, ``An efficient rule-based opc approach using a drc tool
  for 0.18/spl mu/m asic,'' in \emph{Proceedings IEEE 2000 First International
  Symposium on Quality Electronic Design (Cat. No. PR00525)}.\hskip 1em plus
  0.5em minus 0.4em\relax IEEE, 2000, pp. 81--85.

\bibitem{kuang2015robust}
J.~Kuang, W.-K. Chow, and E.~F. Young, ``A robust approach for process
  variation aware mask optimization,'' in \emph{2015 Design, Automation \& Test
  in Europe Conference \& Exhibition (DATE)}.\hskip 1em plus 0.5em minus
  0.4em\relax IEEE, 2015, pp. 1591--1594.

\bibitem{su2016fast}
Y.-H. Su, Y.-C. Huang, L.-C. Tsai, Y.-W. Chang, and S.~Banerjee, ``Fast
  lithographic mask optimization considering process variation,'' \emph{IEEE
  Transactions on Computer-Aided Design of Integrated Circuits and Systems},
  vol.~35, no.~8, pp. 1345--1357, 2016.

\bibitem{matsunawa2015optical}
T.~Matsunawa, B.~Yu, and D.~Z. Pan, ``Optical proximity correction with
  hierarchical bayes model,'' in \emph{Optical Microlithography XXVIII}, vol.
  9426.\hskip 1em plus 0.5em minus 0.4em\relax International Society for Optics
  and Photonics, 2015, p. 94260X.

\bibitem{poonawala2007mask}
A.~Poonawala and P.~Milanfar, ``Mask design for optical microlithography—an
  inverse imaging problem,'' \emph{IEEE Transactions on Image Processing},
  vol.~16, no.~3, pp. 774--788, 2007.

\bibitem{ma2020unified}
Y.~Ma, W.~Zhong, S.~Hu, J.-R. Gao, J.~Kuang, J.~Miao, and B.~Yu, ``A unified
  framework for simultaneous layout decomposition and mask optimization,''
  \emph{IEEE Transactions on Computer-Aided Design of Integrated Circuits and
  Systems}, vol.~39, no.~12, pp. 5069--5082, 2020.

\bibitem{yu2021gpu}
Z.~Yu, G.~Chen, Y.~Ma, and B.~Yu, ``A gpu-enabled level set method for mask
  optimization,'' in \emph{2021 Design, Automation \& Test in Europe Conference
  \& Exhibition (DATE)}.\hskip 1em plus 0.5em minus 0.4em\relax IEEE, 2021, pp.
  1835--1838.

\bibitem{yang2019gan}
H.~Yang, S.~Li, Z.~Deng, Y.~Ma, B.~Yu, and E.~F. Young, ``Gan-opc: Mask
  optimization with lithography-guided generative adversarial nets,''
  \emph{IEEE Transactions on Computer-Aided Design of Integrated Circuits and
  Systems}, vol.~39, no.~10, pp. 2822--2834, 2019.

\bibitem{jiang2019fast}
B.~Jiang, H.~Zhang, J.~Yang, and E.~F. Young, ``A fast machine learning-based
  mask printability predictor for opc acceleration,'' in \emph{Proceedings of
  the 24th Asia and South Pacific Design Automation Conference}, 2019, pp.
  412--419.

\bibitem{geng2019sraf}
H.~Geng, W.~Zhong, H.~Yang, Y.~Ma, J.~Mitra, and B.~Yu, ``Sraf insertion via
  supervised dictionary learning,'' \emph{IEEE Transactions on Computer-Aided
  Design of Integrated Circuits and Systems}, vol.~39, no.~10, pp. 2849--2859,
  2019.

\bibitem{chen2021damo}
G.~Chen, W.~Chen, Q.~Sun, Y.~Ma, H.~Yang, and B.~Yu, ``Damo: Deep agile mask
  optimization for full chip scale,'' \emph{IEEE Transactions on Computer-Aided
  Design of Integrated Circuits and Systems}, 2021.

\bibitem{chen2021develset}
G.~Chen, Z.~Yu, H.~Liu, Y.~Ma, and B.~Yu, ``Develset: Deep neural level set for
  instant mask optimization,'' in \emph{2021 IEEE/ACM International Conference
  On Computer Aided Design (ICCAD)}.\hskip 1em plus 0.5em minus 0.4em\relax
  IEEE, 2021, pp. 1--9.

\bibitem{jiang2020neural}
B.~Jiang, L.~Liu, Y.~Ma, H.~Zhang, B.~Yu, and E.~F. Young, ``Neural-ilt:
  Migrating ilt to neural networks for mask printability and complexity
  co-optimization,'' in \emph{2020 IEEE/ACM International Conference On
  Computer Aided Design (ICCAD)}.\hskip 1em plus 0.5em minus 0.4em\relax IEEE,
  2020, pp. 1--9.

\bibitem{hopkins1951concept}
H.~H. Hopkins, ``The concept of partial coherence in optics,''
  \emph{Proceedings of the Royal Society of London. Series A. Mathematical and
  Physical Sciences}, vol. 208, no. 1093, pp. 263--277, 1951.

\bibitem{cobb1998fast}
N.~B. Cobb, \emph{Fast optical and process proximity correction algorithms for
  integrated circuit manufacturing}.\hskip 1em plus 0.5em minus 0.4em\relax
  University of California, Berkeley, 1998.

\bibitem{gao2014mosaic}
J.-R. Gao, X.~Xu, B.~Yu, and D.~Z. Pan, ``Mosaic: Mask optimizing solution with
  process window aware inverse correction,'' in \emph{2014 51st ACM/EDAC/IEEE
  Design Automation Conference (DAC)}.\hskip 1em plus 0.5em minus 0.4em\relax
  IEEE, 2014, pp. 1--6.

\bibitem{watanabe2017accurate}
Y.~Watanabe, T.~Kimura, T.~Matsunawa, and S.~Nojima, ``Accurate lithography
  simulation model based on convolutional neural networks,'' in \emph{Optical
  Microlithography XXX}, vol. 10147.\hskip 1em plus 0.5em minus 0.4em\relax
  International Society for Optics and Photonics, 2017, p. 101470K.

\bibitem{ye2019lithogan}
W.~Ye, M.~B. Alawieh, Y.~Lin, and D.~Z. Pan, ``Lithogan: End-to-end lithography
  modeling with generative adversarial networks,'' in \emph{2019 56th ACM/IEEE
  Design Automation Conference (DAC)}.\hskip 1em plus 0.5em minus 0.4em\relax
  IEEE, 2019, pp. 1--6.

\bibitem{shao2020ic}
H.-C. Shao, C.-Y. Peng, J.-R. Wu, C.-W. Lin, S.-Y. Fang, P.-Y. Tsai, and Y.-H.
  Liu, ``From ic layout to die photograph: A cnn-based data-driven approach,''
  \emph{IEEE Transactions on Computer-Aided Design of Integrated Circuits and
  Systems}, vol.~40, no.~5, pp. 957--970, 2020.

\bibitem{zhu2017unpaired}
J.-Y. Zhu, T.~Park, P.~Isola, and A.~A. Efros, ``Unpaired image-to-image
  translation using cycle-consistent adversarial networks,'' in
  \emph{Proceedings of the IEEE international conference on computer vision},
  2017, pp. 2223--2232.

\bibitem{he2016deep}
K.~He, X.~Zhang, S.~Ren, and J.~Sun, ``Deep residual learning for image
  recognition,'' in \emph{Proceedings of the IEEE conference on computer vision
  and pattern recognition}, 2016, pp. 770--778.

\bibitem{malkov2018efficient}
Y.~A. Malkov and D.~A. Yashunin, ``Efficient and robust approximate nearest
  neighbor search using hierarchical navigable small world graphs,'' \emph{IEEE
  transactions on pattern analysis and machine intelligence}, vol.~42, no.~4,
  pp. 824--836, 2018.

\bibitem{hadsell2006dimensionality}
R.~Hadsell, S.~Chopra, and Y.~LeCun, ``Dimensionality reduction by learning an
  invariant mapping,'' in \emph{2006 IEEE Computer Society Conference on
  Computer Vision and Pattern Recognition (CVPR'06)}, vol.~2.\hskip 1em plus
  0.5em minus 0.4em\relax IEEE, 2006, pp. 1735--1742.

\bibitem{wu2018unsupervised}
Z.~Wu, Y.~Xiong, S.~X. Yu, and D.~Lin, ``Unsupervised feature learning via
  non-parametric instance discrimination,'' in \emph{Proceedings of the IEEE
  conference on computer vision and pattern recognition}, 2018, pp. 3733--3742.

\bibitem{hjelm2018learning}
R.~D. Hjelm, A.~Fedorov, S.~Lavoie-Marchildon, K.~Grewal, P.~Bachman,
  A.~Trischler, and Y.~Bengio, ``Learning deep representations by mutual
  information estimation and maximization,'' \emph{arXiv preprint
  arXiv:1808.06670}, 2018.

\bibitem{chen2020simple}
T.~Chen, S.~Kornblith, M.~Norouzi, and G.~Hinton, ``A simple framework for
  contrastive learning of visual representations,'' in \emph{International
  conference on machine learning}.\hskip 1em plus 0.5em minus 0.4em\relax PMLR,
  2020, pp. 1597--1607.

\bibitem{khosla2020supervised}
P.~Khosla, P.~Teterwak, C.~Wang, A.~Sarna, Y.~Tian, P.~Isola, A.~Maschinot,
  C.~Liu, and D.~Krishnan, ``Supervised contrastive learning,'' \emph{Advances
  in Neural Information Processing Systems}, vol.~33, pp. 18\,661--18\,673,
  2020.

\bibitem{vasilache2014fast}
N.~Vasilache, J.~Johnson, M.~Mathieu, S.~Chintala, S.~Piantino, and Y.~LeCun,
  ``Fast convolutional nets with fbfft: A gpu performance evaluation,''
  \emph{arXiv preprint arXiv:1412.7580}, 2014.

\bibitem{banerjee2013iccad}
S.~Banerjee, Z.~Li, and S.~R. Nassif, ``Iccad-2013 cad contest in mask
  optimization and benchmark suite,'' in \emph{2013 IEEE/ACM International
  Conference on Computer-Aided Design (ICCAD)}.\hskip 1em plus 0.5em minus
  0.4em\relax IEEE, 2013, pp. 271--274.

\bibitem{ajayi2019openroad}
T.~Ajayi and D.~Blaauw, ``Openroad: Toward a self-driving, open-source digital
  layout implementation tool chain,'' in \emph{Proceedings of Government
  Microcircuit Applications and Critical Technology Conference}, 2019.

\end{thebibliography}
}

\end{document}